\documentclass[11pt]{article}
\usepackage{amsmath,amssymb,amsthm,mathtools}
\usepackage{hyperref}
\usepackage{geometry}
\title{Path-Sampled Integrated Gradients}
\author{%
Firuz Kamalov\\
\small Department of Computational Sciences, Canadian University Dubai, UAE
\and
Fadi Thabtah\\
\small Abu Dhabi School of Management, Abu Dhabi, UAE
\and
R. Sivaraj\\
\small Department of Mathematics, Dr. B.R. Ambedkar National Institute of Technology, India
\and
Neda Abdelhamid\\
\small Abu Dhabi School of Management, Abu Dhabi, UAE
}
\date{\today}

\theoremstyle{definition}
\newtheorem{definition}{Definition}[section]
\theoremstyle{plain}
\newtheorem{theorem}[definition]{Theorem}
\newtheorem{proposition}[definition]{Proposition}
\newtheorem{corollary}[definition]{Corollary}

\theoremstyle{remark}
\newtheorem{remark}[definition]{Remark}

\begin{document}
\maketitle

\begin{abstract}
We introduce path-sampled integrated gradients (PS-IG), a framework that generalizes feature attribution by computing the expected value over baselines sampled along the linear interpolation path. We prove that PS-IG is mathematically equivalent to path-weighted integrated gradients, provided the weighting function matches the cumulative distribution function of the sampling density. This equivalence allows the stochastic expectation to be evaluated via a deterministic Riemann sum, improving the error convergence rate from $O(m^{-1/2})$ to $O(m^{-1})$ for smooth models. Furthermore, we demonstrate analytically that PS-IG functions as a variance-reducing filter against gradient noise---strictly lowering attribution variance by a factor of $1/3$ under uniform sampling---while preserving key axiomatic properties such as linearity and implementation invariance.
\end{abstract}


\section{Introduction}

The remarkable performance of deep neural networks  in domains ranging from medical diagnosis to autonomous driving has been accompanied by an increasing demand for transparency. To bridge the gap between model accuracy and interpretability, a variety of {feature attribution} methods have been developed to quantify the contribution of individual input features to a model's prediction \cite{lundberg2017unified, simonyan2013deep}. This objective shares a strong theoretical foundation with mathematical feature selection methods \cite{kamalov2025feature}, which similarly aim to isolate significant variables within complex systems. Among these, {integrated gradients (IG)} \cite{sundararajan2017axiomatic} has emerged as a gold standard due to its solid theoretical foundation. In particular, IG satisfies desirable axiomatic properties such as {sensitivity} and {completeness}, which many heuristic methods fail to uphold.

Despite its theoretical elegance, standard IG suffers from significant practical limitations, most notably its dependence on the choice of the baseline \cite{sturmfels2020visualizing}. The standard practice of using a zero-vector -- such as a a black image in computer vision tasks -- often introduces artifacts or fails to capture feature relevance when the baseline is far from the data manifold \cite{kapishnikov2021guided}. Furthermore, gradients in deep networks can be highly volatile; integrating along a single fixed path can aggregate noise, resulting in unstable or ``speckled'' attribution maps that degrade user trust \cite{smilkov2017smoothgrad}.

To address these shortcomings, the research community has proposed several extensions. {Expected gradients} (EG) \cite{erion2021improving} generalizes IG by averaging attributions over a distribution of baselines sampled from the training set, rather than relying on a single reference point. While effective at reducing baseline bias, EG is computationally expensive, requiring many forward and backward passes. Concurrently, methods like {weighted integrated gradients} \cite{kien2025weighted} and {path-weighted integrated gradients} (PWIG) \cite{kamalov2025path} have introduced mechanisms to modulate the integration path. Kien et al. propose weighting baselines based on a fitness function, while Kamalov et al. introduce a scalar weighting function $g(\alpha)$ into the path integral to emphasize specific segments of the interpolation. While these methods offer improvements in flexibility and stability, a unified framework that reconciles the stochastic nature of baseline sampling with the efficiency of deterministic path integration is still missing.

In this paper, we introduce \textit{path-sampled integrated gradients} (PS-IG), a novel formulation that addresses the fragility of single-baseline attribution while maintaining high computational efficiency. Instead of relying on a single starting point or sampling globally from a dataset, PS-IG computes the expected attribution over a distribution of baselines sampled strictly along the interpolation path between an initial reference and the input. The geometric and statistical properties of sampling along such linear segments have been previously analyzed in the context of synthetic data generation \cite{kamalov2025smote, kamalov2024smote}, suggesting that constrained linear sampling effectively captures the local manifold structure required for robust estimation."

We provide a theoretical analysis showing that PS-IG unifies two seemingly distinct approaches. First, we demonstrate that PS-IG can be viewed as a specific instance of EG, where the prior distribution is constrained to the linear path. Second, and more importantly, we prove that PS-IG is mathematically equivalent to PWIG, where the weighting function $g(\alpha)$ is the CDF of the sampling density.

This equivalence yields a ``free lunch'' in terms of computational efficiency: we can achieve the robustness benefits of stochastic sampling using a deterministic, weighted integral that costs no more to compute than standard IG. By formally linking sampling strategies to path weighting, PS-IG offers a robust, efficient, and theoretically grounded solution to the baseline selection problem in feature attribution.

\section{Background}

Let $F : \mathbb{R}^n \to \mathbb{R}$ be a differentiable model. Fix an input $x \in \mathbb{R}^n$ and a baseline $x' \in \mathbb{R}^n$. Denote the straight-line path from $x'$ to $x$ by
\[
\gamma(\alpha) = x' + \alpha(x - x'), \quad \alpha \in [0, 1].
\]
We write the path derivative 
\[
F'(\alpha) := \frac{d}{d\alpha}F(\gamma(\alpha)) = \sum_{i=1}^n (x_i - x'_i) \frac{\partial F(\gamma(\alpha))}{\partial x_i},
\]
where $x_i$ is an individual component of $x$.

\begin{definition}
\cite{sundararajan2017axiomatic}
The \textit{integrated gradients}  attribution for feature $i$ at input $x$ with baseline $x'$ is
\[
IG_i(x; x') := (x_i - x'_i) \int_0^1 \frac{\partial F(\gamma(\alpha))}{\partial x_i} \, d\alpha.
\]
\end{definition}

IG has several desirable properties when $F$ is smooth along the path: implementation invariance, sensitivity, symmetry preservation, and completeness.

\begin{definition}\cite{kamalov2025path}
Let $g : [0, 1] \to \mathbb{R}^+$ be an integrable weight function. The \textit{path-weighted integrated gradients} attribution for feature $i$ is
\[
PWIG_i(x; x'; g) := (x_i - x'_i) \int_0^1 g(\alpha) \frac{\partial F(\gamma(\alpha))}{\partial x_i} \, d\alpha.
\]
\end{definition}

\begin{remark}
When $g(\alpha) \equiv 1$ we recover standard IG. The factor $g(\alpha)$ allows emphasizing or de-emphasizing contributions from different segments of the path.
\end{remark}

Not that summing over $i$ gives
\[
\sum_{i=1}^n PWIG_i(x; x'; g) = \int_0^1 g(\alpha) F'(\alpha) \, d\alpha.
\]
Define the completeness residual
\[
R(g) := \Delta F - \sum_{i=1}^n PWIG_i(x; x'; g), \quad \Delta F := F(x) - F(x').
\]
Then, PWIG is complete iff $R(g) = 0$.
Note that
\[
R(g) = \int_0^1 (1 - g(\alpha)) F'(\alpha) \, d\alpha.
\]
Therefore, PWIG is complete iff $g(\alpha) \equiv 1$ or $F'(\alpha)$ is supported only where $g(\alpha) = 1$; in general completeness fails. Assume $g$ is continuously differentiable and $F(\gamma(\alpha))$ is continuous. Integration by parts yields
\[
\int_0^1 g(\alpha)F'(\alpha) \, d\alpha = g(1)F(1) - g(0)F(0) - \int_0^1 g'(\alpha)F(\gamma(\alpha)) \, d\alpha,
\]
so the residual can be written
\begin{equation} \label{eq:residual}
R(g) = \Delta F - \Big(g(1)F(1) - g(0)F(0)\Big) + \int_0^1 g'(\alpha)F(\gamma(\alpha)) \, d\alpha,
\end{equation}
where $F(s) = F(\gamma(s))$ in the boundary terms.

\section{Path-Sampled Integrated Gradients}

Let $p$ be a probability density on $[0, 1]$. Fix an input $x \in \mathbb{R}^n$ and an {initial baseline} $x' \in \mathbb{R}^n$. For each $s \in [0, 1]$, we define an intermediate baseline $b_s$ located on the straight-line path between the initial baseline and the input:
\[
b_s := x' + s(x - x').
\]
We compute the standard IG attribution for $x$ using the intermediate baseline $b_s$:
\[
IG_i(x; b_s) = (x_i - b_{s,i}) \int_0^1 \frac{\partial F(b_s + u(x - b_s))}{\partial x_i} \, du.
\]

\begin{definition}
\label{def:psig}
The \textit{path-sampled integrated gradients}  attribution for feature $i$, given input $x$, initial baseline $x'$, and sampling density $p$, is defined as:
\[
PSIG_i(x; x'; p) := \mathbb{E}_{s \sim p} \big[ IG_i(x; b_s) \big] = \int_0^1 IG_i(x; x' + s(x - x')) \, p(s) \, ds.
\]
\end{definition}

In the following result, we show that PS-IG can be simplified to a single path integral using the cumulative distribution function of the sampling density. In other words, PS-IG  is equivalent to $PWIG(\cdot, \cdot; G)$ with weight function $g = G$.

\begin{theorem}
\label{prop:psig_simplification}
Let $p$ be an integrable density on $[0, 1]$ and let $G$ be its CDF.
Then for each feature $i$,
\[
PSIG_i(x; x'; p) = (x_i - x'_i) \int_0^1 G(\alpha) \frac{\partial F(\gamma(\alpha))}{\partial x_i} \, d\alpha.
\]

\end{theorem}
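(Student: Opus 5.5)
The plan is to rewrite the inner integral of $IG_i(x;b_s)$ as an integral over the original path parameter $\alpha$, and then exchange the order of integration in $s$ and $\alpha$ (Fubini–Tonelli).

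\emph{Step 1: reparametrize.} Since $x - b_s = (1-s)(x-x')$, the point $b_s + u(x-b_s)$ equals $x' + (s + u(1-s))(x-x') = \gamma(s+u(1-s))$. For $s<1$, substitute $\alpha = s + u(1-s)$, so $d\alpha = (1-s)\,du$ and $u\in[0,1]$ maps onto $\alpha\in[s,1]$. This gives
\[
IG_i(x;b_s) = (1-s)(x_i-x'_i)\int_0^1 \frac{\partial F(\gamma(s+u(1-s)))}{\partial x_i}\,du = (x_i-x'_i)\int_s^1 \frac{\partial F(\gamma(\alpha))}{\partial x_i}\,d\alpha .
\]
The factor $(1-s)$ from $x_i - b_{s,i}$ cancels exactly against the Jacobian. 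At $s=1$ both sides are zero, and in any case this is a single point of $p$-measure zero.

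\emph{Step 2: swap integrals.} Plugging Step 1 into Definition~\ref{def:psig} yields
\[
PSIG_i = (x_i-x'_i)\int_0^1 p(s)\int_0^1 \mathbf{1}\{\alpha\ge s\}\,\frac{\partial F(\gamma(\alpha))}{\partial x_i}\,d\alpha\,ds.
\]
Exchanging the order of integration turns the inner integral in $s$ into $\int_0^\alpha p(s)\,ds = G(\alpha)$, which is the claimed formula.

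\emph{Main obstacle.} The only real point is justifying Fubini. I would assume, as is implicit in IG being well defined, that $\alpha\mapsto \partial_i F(\gamma(\alpha))$ is integrable on $[0,1]$, for example bounded because it is continuous on a compact path. Then $|p(s)\,\mathbf{1}\{\alpha\ge s\}\,\partial_iF(\gamma(\alpha))| \le p(s)\,|\partial_iF(\gamma(\alpha))|$. The right-hand side is a product of integrable functions of separate variables, so it is integrable on $[0,1]^2$, and Tonelli followed by Fubini applies. Measurability of the integrand is clear because it is a product of measurable functions.
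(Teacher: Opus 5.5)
Your proof is correct and follows the paper's argument step for step. You use the same reparametrization $b_s + u(x-b_s) = \gamma(s+u(1-s))$, the same cancellation of $(1-s)$ against the Jacobian to obtain $\int_s^1$, and the same Fubini exchange producing $G(\alpha)=\int_0^\alpha p(s)\,ds$. You also fill in two details the paper only asserts: the endpoint $s=1$, which you treat separately, and the exchange of integrals, which you justify via the bound $p(s)\,\bigl|\partial F(\gamma(\alpha))/\partial x_i\bigr|$ under integrability of the path gradient.
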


\begin{proof}
Fix $s \in [0, 1]$. First, we derive two basic identities using the definition of $b_s$:
\begin{equation}
x_i - b_{s,i} = x_i - (x'_i + s(x_i - x'_i)) = (1 - s)(x_i - x'_i). \tag{2}
\end{equation}
Then, for the path integration variable $u$:
\begin{equation}
\begin{aligned}
b_s + u(x - b_s) &= x' + s(x - x') + u(1 - s)(x - x') \\
&= x' + (s + u(1 - s))(x - x') \\
&= \gamma(s + u(1 - s)).
\end{aligned} \tag{3}
\end{equation}
Thus,
\[
IG_i(x; b_s) = (1 - s)(x_i - x'_i) \int_0^1 \frac{\partial F(\gamma(s + u(1 - s)))}{\partial x_i} \, du.
\]
Change variables $\alpha = s + u(1 - s)$; then $du = d\alpha/(1 - s)$ and $\alpha$ runs from $s$ to $1$. Therefore,
\[
IG_i(x; b_s) = (x_i - x'_i) \int_{s}^1 \frac{\partial F(\gamma(\alpha))}{\partial x_i} \, d\alpha.
\]
Taking the expectation over $s$ and exchanging integrals (Fubini/Tonelli), we obtain
\begin{equation}
\begin{aligned}
\int_0^1 p(s) \left( \int_s^1 \frac{\partial F(\gamma(\alpha))}{\partial x_i} \, d\alpha \right) ds &= \int_0^1 \left( \int_0^\alpha p(s) \, ds \right) \frac{\partial F(\gamma(\alpha))}{\partial x_i} \, d\alpha \\
&= \int_0^1 G(\alpha) \frac{\partial F(\gamma(\alpha))}{\partial x_i} \, d\alpha.
\end{aligned} \tag{4}
\end{equation}
Multiplying by $(x_i - x'_i)$ concludes the proof.
\end{proof}

\begin{corollary}[Uniform sampling]
If $p \equiv 1$, then $G(\alpha) = \alpha$ and
\[
PSIG_i(x; x'; \text{Unif}) = (x_i - x'_i) \int_0^1 \alpha \frac{\partial F(\gamma(\alpha))}{\partial x_i} \, d\alpha.
\]
\end{corollary}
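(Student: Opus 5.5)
The corollary is a direct specialization of Theorem~\ref{prop:psig_simplification}, so the plan is simply to check that its hypotheses hold for the uniform density and then compute the CDF. First, note that $p \equiv 1$ on $[0,1]$ is nonnegative, integrable, and satisfies $\int_0^1 p(s)\,ds = 1$, so it is an admissible sampling density in the sense of Definition~\ref{def:psig} and of the theorem. No further regularity is needed: the theorem only assumes integrability of $p$, and the smoothness of $F$ along $\gamma$ is the same standing assumption used there.

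Second, I will compute the CDF explicitly:
\[
G(\alpha) = \int_0^\alpha p(s)\,ds = \int_0^\alpha 1\,ds = \alpha, \qquad \alpha \in [0,1].
\]
Substituting $G(\alpha) = \alpha$ into the identity of Theorem~\ref{prop:psig_simplification} immediately gives
\[
PSIG_i(x; x'; \text{Unif}) = (x_i - x'_i)\int_0^1 \alpha\,\frac{\partial F(\gamma(\alpha))}{\partial x_i}\,d\alpha,
\]
which is the claimed formula. Equivalently, in the notation of the Background section, this is $PWIG_i(x;x';g)$ with the linear weight $g(\alpha)=\alpha$.

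There is no real obstacle. The only point deserving a sentence is that the Fubini/Tonelli exchange in the theorem's proof is valid here. Since $p$ is bounded and the integrand is integrable over the triangle $\{0\le s\le\alpha\le 1\}$ whenever $\partial_i F\circ\gamma$ is integrable on $[0,1]$, this is already covered by the theorem. As an optional sanity check, one could verify the formula directly on a linear model $F(z)=w^\top z$. In that case $IG_i(x;b_s)=(1-s)w_i(x_i-x'_i)$, whose average over $s \sim \text{Unif}[0,1]$ is $\tfrac12 w_i(x_i-x'_i)$. This matches $(x_i-x'_i)\,w_i\int_0^1\alpha\,d\alpha$.
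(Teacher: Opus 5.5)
Your proposal is correct and matches the paper's implicit argument: the corollary is just Theorem~\ref{prop:psig_simplification} specialized to $p\equiv 1$, whose CDF is $G(\alpha)=\int_0^\alpha 1\,ds=\alpha$. Your remarks on the Fubini exchange and the linear-model sanity check are fine but not needed.
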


\begin{remark}
The family of PS-IGs obtained by sampling baselines on the straight-line path corresponds exactly to the family of PWIG weights $g$ that are nondecreasing CDFs with $g(0) = 0$ and $g(1) = 1$. Conversely, any PWIG whose weight $g$ is a CDF arises as a PS-IG under sampling density $p = g'$.
\end{remark}

\begin{remark}
\textit{Discrete sampling:} if $s_1, \dots, s_m$ are sampled i.i.d. from $p$, the empirical average
\[
\frac{1}{m} \sum_{j=1}^m IG_i(x; b_{s_j}) = (x_i - x'_i) \int_0^1 \widehat{G}_m(\alpha) \, h_i(\alpha) \, d\alpha,
\]
where $\widehat{G}_m$ is the empirical CDF of the $s_j$; as $m \to \infty$, $\widehat{G}_m \to G$ a.s.
\end{remark}

For PS-IG, substituting into Equation 1 we obtain the completeness residual:
\[
R(g) = \Delta F - F(1) + \int_0^1 g'(\alpha) F(\gamma(\alpha)) \, d\alpha.
\]
Since $\Delta F = F(1) - F(0)$, this simplifies to
\[
R(g) = -F(0) + \int_0^1 g'(\alpha) F(\gamma(\alpha)) \, d\alpha.
\]
Equivalently, writing $p(\alpha) = g'(\alpha)$,
\[
R(g) = \mathbb{E}_{s \sim p}[F(\gamma(s))] - F(x'),
\]
which shows that the residual is exactly the difference between the average prediction of $F$ along the path and the initial baseline prediction $F(x')$.

\subsection{Axiomatic Characterization}

All the axioms—implementation invariance, sensitivity (dummy), linearity, symmetry—that apply to PWIG are preserved for PS-IG.
On the other hand, it does not satisfy the standard completeness axiom. However, it satisfies a natural variation suited for stochastic baselines.

\begin{proposition}[Completeness]
\label{thm:expected_completeness}
Let $F: \mathbb{R}^n \to \mathbb{R}$ be differentiable. The sum of PS-IG attributions equals the difference between the model output at $x$ and the expected model output along the baseline path.
\[
\sum_{i=1}^n PSIG_i(x; x'; p) = F(x) - \mathbb{E}_{s \sim p}\left[ F(x' + s(x-x')) \right].
\]
\end{proposition}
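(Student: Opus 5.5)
The cleanest route avoids the weighted-integral form and works directly from Definition~\ref{def:psig}, using the standard completeness of IG for each intermediate baseline. Fix $s \in [0,1]$. The function $u \mapsto F(b_s + u(x - b_s))$ is differentiable on $[0,1]$, and its derivative is $\sum_i (x_i - b_{s,i})\,\partial_i F(b_s + u(x-b_s))$. Summing $IG_i(x;b_s)$ over $i$ therefore gives $\int_0^1 \frac{d}{du}F(b_s+u(x-b_s))\,du$. By the fundamental theorem of calculus this equals $F(x) - F(b_s)$. This is just the completeness axiom of IG applied with baseline $b_s$.

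Next, sum Definition~\ref{def:psig} over $i$ and use linearity of the expectation (finite sum) to get
\[
\sum_{i=1}^n PSIG_i(x;x';p) = \int_0^1 \bigl(F(x) - F(b_s)\bigr)p(s)\,ds = F(x) - \mathbb{E}_{s\sim p}[F(x'+s(x-x'))],
\]
where $\int_0^1 p = 1$ is used for the first term. This is exactly the claim.

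As a cross-check, and as an alternative proof, one can use Theorem~\ref{prop:psig_simplification}. Summing over $i$ gives $\int_0^1 G(\alpha)F'(\alpha)\,d\alpha$. Integration by parts with $G(0)=0$ and $G(1)=1$ turns this into $F(1) - \int_0^1 p(\alpha)F(\gamma(\alpha))\,d\alpha$. This agrees with the residual formula $R(G) = \mathbb{E}_{s\sim p}[F(\gamma(s))] - F(x')$ derived just above.

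The main obstacle is regularity. The fundamental theorem of calculus step needs $\frac{d}{du}F(\gamma(\cdot))$ to be integrable, which mere differentiability does not guarantee. Exchanging the sum and the expectation also needs $s\mapsto F(b_s)$ to be measurable and integrable against $p$. I would handle this by assuming, as the paper implicitly does for IG to be well defined, that $\alpha\mapsto F(\gamma(\alpha))$ is continuously differentiable (or absolutely continuous) on $[0,1]$. Then $F\circ\gamma$ is continuous, hence bounded, on $[0,1]$, and all integrals are finite.
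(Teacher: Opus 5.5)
Your argument is correct and is essentially the paper's proof: you pull the finite sum inside the expectation, apply IG completeness $\sum_i IG_i(x;b_s)=F(x)-F(b_s)$ at each intermediate baseline, and use $\int_0^1 p=1$; your integration-by-parts cross-check via Theorem~\ref{prop:psig_simplification} is also valid. On regularity, the hypothesis that actually matters is Lebesgue integrability of each $\partial_i F(\gamma(\cdot))$ on $[0,1]$. That single condition makes every $PSIG_i$ well defined and already justifies the fundamental theorem of calculus step, whereas $C^1$-smoothness of $F\circ\gamma$ alone does not guarantee that the individual $IG_i$ integrals exist; continuity of $s\mapsto F(b_s)$ is automatic from differentiability of $F$.
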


\begin{proof}
By Definition 2.1 and the linearity of the expectation and summation:
\[
\begin{aligned}
\sum_{i=1}^n PSIG_i(x; x'; p) &= \sum_{i=1}^n \mathbb{E}_{s \sim p} \left[ IG_i(x; b_s) \right] \\
&= \mathbb{E}_{s \sim p} \left[ \sum_{i=1}^n IG_i(x; b_s) \right].
\end{aligned}
\]
Applying the completeness axiom of standard IG with respect to the baseline $b_s$, we have $\sum_{i=1}^n IG_i(x; b_s) = F(x) - F(b_s)$. Substituting this into the expectation:
\[
\begin{aligned}
\mathbb{E}_{s \sim p} \left[ F(x) - F(b_s) \right] &= F(x) - \mathbb{E}_{s \sim p} \left[ F(b_s) \right] \\
&= F(x) - \mathbb{E}_{s \sim p} \left[ F(x' + s(x-x')) \right].
\end{aligned}
\]
\end{proof}

\begin{remark}
The term $\mathbb{E}_{s \sim p}[F(b_s)]$ can be interpreted as a "smoothed baseline" prediction. Unlike standard completeness, which is sensitive to the exact value of $F(x')$, expected baseline completeness is robust to local fluctuations of the model at the initial baseline $x'$.
\end{remark}

\subsection{Computational Complexity and Efficiency}

A significant advantage of PS-IG is that it admits a deterministic approximation with a convergence rate superior to standard Monte Carlo sampling. 
If PS-IG is computed as an expectation, it would require a Monte Carlo estimator which involves sampling $m$ random baselines $s_1, \dots, s_m$ from the distribution $p$ and averaging the standard IG computed for each baseline. By the Central Limit Theorem, the root mean square error of this Monte Carlo estimator converges at a rate of $O(m^{-1/2})$. This method is computationally expensive, as it requires computing a full path integral for every sampled baseline.

However, by leveraging the theoretical equivalence between PS-IG and PWIG (Proposition 2.2), we can avoid stochastic sampling entirely. Instead, we can approximate the single weighted path integral using a deterministic Riemann sum:
\[
\hat{I}_{CDF} = (x_i - x'_i) \frac{1}{m} \sum_{k=1}^m G\left(\frac{k}{m}\right) \frac{\partial F(\gamma(k/m))}{\partial x_i}.
\]
For continuously differentiable functions, this deterministic estimator achieves an error convergence rate of $O(m^{-1})$, which is significantly faster than the Monte Carlo rate. The computational cost of this deterministic approximation is identical to that of standard IG. Thus PS-IG achieves the robustness benefits of an expectation-based method without incurring the multiplicative computational overhead typically associated with such methods.

\subsection{Stability and Variance Reduction}

A well-documented challenge in attributing deep neural networks is the phenomenon of shattered gradients \cite{balduzzi2017shattered}, where the gradient $\nabla F$ fluctuates rapidly and noisily as a function of the input. Standard IG integrates these fluctuations directly with uniform weight, making the resulting attribution sensitive to high-frequency noise along the path.

In this section, we prove that PS-IG is inherently more stable than standard IG. By modeling the gradient fluctuations as a stochastic noise process, we show that the weighting function $G(\alpha)$ inherent to PS-IG acts as a variance-reducing filter.

Let us model the gradient along the path $\gamma(\alpha)$ as the sum of a smooth signal component $\mu(\alpha)$ and a zero-mean, uncorrelated noise component $\xi(\alpha)$  representing the local volatility of the decision boundary. In particular, we make the following assumption.
For a feature $i$, the path gradient is given by:
\[
\frac{\partial F(\gamma(\alpha))}{\partial x_i} = \mu_i(\alpha) + \xi_i(\alpha),
\]
where $\mu_i(\alpha)$ is a deterministic signal and $\xi_i(\alpha)$ is a stochastic process satisfying:
\[
\mathbb{E}[\xi_i(\alpha)] = 0, \quad \mathbb{E}[\xi_i(\alpha)\xi_i(\beta)] = \sigma^2 \delta(\alpha - \beta),
\]
where $\sigma^2$ represents the noise magnitude and $\delta$ is the Dirac delta function.

We define the {attribution variance} as the variance of the attribution score induced by the gradient noise $\xi$. Lower variance implies an explanation that is more robust to the local volatility of the model.

\begin{theorem}
\label{thm:variance_reduction}
Let $Var_{IG}$ be the attribution variance of standard IG, and $Var_{PS}$ be the attribution variance of PS-IG.
If $p$ is a continuous density supported on $(0, 1)$, then:
\[
Var_{PS} = Var_{IG} \int_0^1 G(\alpha)^2 \, d\alpha.
\]
It follows that
\[
Var_{PS} < Var_{IG}. 
\]
\end{theorem}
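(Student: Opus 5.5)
The plan is to reduce both attributions to linear functionals of the noise process and compute their variances with the white-noise covariance. By Theorem~\ref{prop:psig_simplification}, PS-IG equals PWIG with weight $G$. Standard IG is the special case with weight $g\equiv 1$. So for a generic weight $g$ I will write
\[
A_i(g) = (x_i - x'_i)\int_0^1 g(\alpha)\mu_i(\alpha)\,d\alpha + (x_i - x'_i)\int_0^1 g(\alpha)\xi_i(\alpha)\,d\alpha .
\]
The first term is deterministic. The second has mean zero, since $\mathbb{E}[\xi_i(\alpha)]=0$ and we may exchange expectation and integral.

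Next I will compute the variance of the noise term by expanding the square as a double integral and taking expectations inside:
\[
\mathrm{Var}(A_i(g)) = (x_i - x'_i)^2\int_0^1\!\!\int_0^1 g(\alpha)g(\beta)\,\sigma^2\delta(\alpha-\beta)\,d\alpha\,d\beta = \sigma^2 (x_i - x'_i)^2 \int_0^1 g(\alpha)^2\,d\alpha .
\]
With $g\equiv 1$ this gives $Var_{IG}=\sigma^2(x_i-x'_i)^2$. With $g=G$ it gives $Var_{PS}=Var_{IG}\int_0^1 G^2$, which is the claimed identity. This step is formal white-noise calculus. I will justify it either by reading $\int g\,\xi$ as a Wiener integral $\int g\,dW$ and applying the Itô isometry, or by treating the Dirac delta in the sense of the sifting property. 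Since $G$ is bounded and measurable, $G\in L^2[0,1]$ and the integral is well defined.

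For the strict inequality, $G$ is a CDF, so $0\le G\le 1$ and hence $\int_0^1 G^2 \le 1$. Equality would force $G(\alpha)=1$ for almost every $\alpha$. But $G$ is continuous with $G(0)=0$, because $p$ is an integrable density with no atom at $0$. So $G<1$ on a neighbourhood $[0,\epsilon)$, which has positive measure, and the integral is strictly less than $1$. The assumption that $p$ is supported on $(0,1)$ is more than is needed here; continuity of $G$ at $0$ suffices.

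To get the strict inequality $Var_{PS}<Var_{IG}$ I also need $Var_{IG}>0$, i.e.\ $\sigma^2>0$ and $x_i\neq x'_i$. I will state this nondegeneracy explicitly; otherwise both variances are zero and only $\le$ holds. The main obstacle is the rigorous justification of the delta-correlated variance computation, in particular the exchange of expectation with the double integral. The Wiener-integral (Itô isometry) interpretation is what I would use first. As a check, for uniform sampling $\int_0^1\alpha^2\,d\alpha=1/3$, which matches the factor quoted in the abstract.
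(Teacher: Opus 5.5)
Your proposal is correct and follows the paper's proof. In both, you write IG and PS-IG as weighted path integrals (weight $1$, and weight $G$ via Theorem~\ref{prop:psig_simplification}), apply the white-noise/It\^o isometry to get variance $\sigma^2(x_i-x'_i)^2\int_0^1 w(\alpha)^2\,d\alpha$, and specialize to each weight. Your treatment of the strict inequality adds details the paper only asserts: you use continuity of $G$ with $G(0)=0$ to rule out $G=1$ a.e., and you require $\sigma^2(x_i-x'_i)^2>0$, whereas the paper simply states $\int_0^1 G^2<1$ "since $G$ is a CDF" and silently assumes $C>0$.
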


\begin{proof}
The attribution $\mathcal{A}_i$ for a general weight $w$ is:
\[
\mathcal{A}_i = (x_i - x'_i) \int_0^1 w(\alpha) (\mu_i(\alpha) + \xi_i(\alpha)) \, d\alpha.
\]
The variance of this estimator is determined solely by the noise term:
\[
\text{Var}(\mathcal{A}_i) = (x_i - x'_i)^2 \text{Var} \left( \int_0^1 w(\alpha) \xi_i(\alpha) \, d\alpha \right).
\]
Using the isometry property of the stochastic integral for white noise (It\^o isometry), the variance is proportional to the squared $L^2$ norm of the weighting function:
\[
\text{Var} \left( \int_0^1 w(\alpha) \xi_i(\alpha) \, d\alpha \right) = \sigma^2 \int_0^1 w(\alpha)^2 \, d\alpha.
\]
For standard IG, $w(\alpha) = 1$:
\[
Var_{IG} = C \int_0^1 1^2 \, d\alpha = C, \quad \text{where } C = \sigma^2 (x_i - x'_i)^2.
\]
For PS-IG, $w(\alpha) = G(\alpha)$:
\[
Var_{PS} = C \int_0^1 G(\alpha)^2 \, d\alpha.
\]
Since $G$ is a CDF on $[0,1]$:
\[
\int_0^1 G(\alpha)^2 \, d\alpha <  1.
\]
Thus, $Var_{PS} < Var_{IG}$.
\end{proof}

The reduction in variance depends on the choice of the sampling strategy. We can quantify this gain for the most common sampling distribution.

\begin{corollary}
If baselines are sampled uniformly along the path, then:
\[
Var_{PS(\text{Unif})} = \frac{1}{3} Var_{IG}.
\]
\end{corollary}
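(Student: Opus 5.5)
The plan is to specialize Theorem~\ref{thm:variance_reduction} to the uniform density and evaluate the resulting integral explicitly. Uniform sampling along the path means $p \equiv 1$ on $[0,1]$. This is continuous on $[0,1]$, and its support differs from $(0,1)$ only by the endpoints, a null set. So the identity
\[
Var_{PS} = Var_{IG} \int_0^1 G(\alpha)^2 \, d\alpha
\]
applies, since its derivation uses only the It\^o isometry with weight $w = G$ and does not depend on the endpoint behaviour of $p$.

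The computation has two steps. First, by the Uniform sampling corollary, the CDF is $G(\alpha) = \alpha$. Equivalently, PS-IG with uniform sampling is PWIG with weight $w(\alpha)=\alpha$, by Theorem~\ref{prop:psig_simplification}. Second, substitute $w(\alpha)=\alpha$ into the variance expression from the proof of Theorem~\ref{thm:variance_reduction}. With $C = \sigma^2 (x_i - x'_i)^2$, this gives
\[
Var_{PS(\text{Unif})} = C \int_0^1 \alpha^2 \, d\alpha = \frac{C}{3}.
\]
Since $Var_{IG} = C$, the ratio is exactly $1/3$.

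I do not expect a real obstacle, since the corollary reduces to the elementary integral $\int_0^1 \alpha^2\,d\alpha = 1/3$. The only point to check is that the hypothesis of Theorem~\ref{thm:variance_reduction} is met. If one insists on support strictly inside $(0,1)$, it is simpler to bypass the theorem's hypothesis and redo the isometry step directly with $w(\alpha)=\alpha$. That argument needs only $w \in L^2[0,1]$, which clearly holds here.
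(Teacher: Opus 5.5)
Your proposal is correct and follows the paper's route: set $G(\alpha)=\alpha$ in the identity $Var_{PS} = Var_{IG}\int_0^1 G(\alpha)^2\,d\alpha$ from Theorem~\ref{thm:variance_reduction} and evaluate $\int_0^1 \alpha^2\,d\alpha = 1/3$. Your extra check is also sound: that identity uses only the isometry step with $w=G$, and the support hypothesis plays no role in it.
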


This result lends support to the  smoothness in PS-IG attributions. Standard IG gives full weight to gradients along the entire path, including regions where the model might be uncertain or noisy. PS-IG dampens the contribution of gradients, particularly those near the baseline, effectively filtering out noise that accumulates early in the integration path.

\section{Numerical Experiments}
In this section, we demonstrate numerically the variance reduction and convergence rate advantages of the proposed PS-IG method.

To empirically validate the stability of PS-IG, we examined the attribution variance under a shattered gradient noise model. We utilized three 3-variable functions representing distinct gradient landscapes: linear, quadratic, and sigmoidal. For each function, we injected Gaussian white noise $\mathcal{N}(0, 1)$ into the gradients computed along the integration path  and performed 1,000 Monte Carlo trials.

The results, presented in Table \ref{tab:variance_results}, demonstrate that PS-IG significantly outperforms standard IG in terms of stability. Across all function types, PS-IG consistently reduces the empirical variance by a factor of approximately 3. This finding aligns strictly with the theoretical prediction of Corollary 2.3, confirming that PS-IG effectively filters high-frequency gradient noise regardless of the underlying model architecture.

\begin{table}[h]
\centering
\caption{Empirical variance of attribution scores under stochastic gradient noise. PS-IG consistently reduces variance by a factor of $\approx 1/3$ compared to standard IG.}
\label{tab:variance_results}
\begin{tabular}{l c c c}
\hline
{Function} & {Var(IG)} & {Var(PS-IG)} & {Ratio} \\
\hline
Linear ($x_1+x_2+x_3$)& 0.00992 & 0.00333 & 0.3356 \\
Quadratic ($x_1^2+x_1x_2+x_3^2$) & 0.01003 & 0.00335 & 0.3337 \\
Sigmoidal ($\sigma(10(\bar{x}-0.5))$) & 0.00982 & 0.00333 & 0.3394 \\
\hline
\end{tabular}
\end{table}

To test the computational advantage of the deterministic formulation, we analyzed the convergence rate of the PS-IG estimator. Using the 3-variable sigmoidal function from the previous experiment as the target, we computed the mean squared error (MSE) of the attribution against a high-precision ground truth ($N=10^5$) across varying computational budgets.
Figure \ref{fig:convergence} compares the deterministic approach against a Monte Carlo baseline. The results show that the deterministic estimator exhibits a significantly steeper convergence slope, achieving orders of magnitude lower error for the same number of gradient evaluations. 

\begin{figure}[h]
    \centering
    \includegraphics[width=0.9\linewidth]{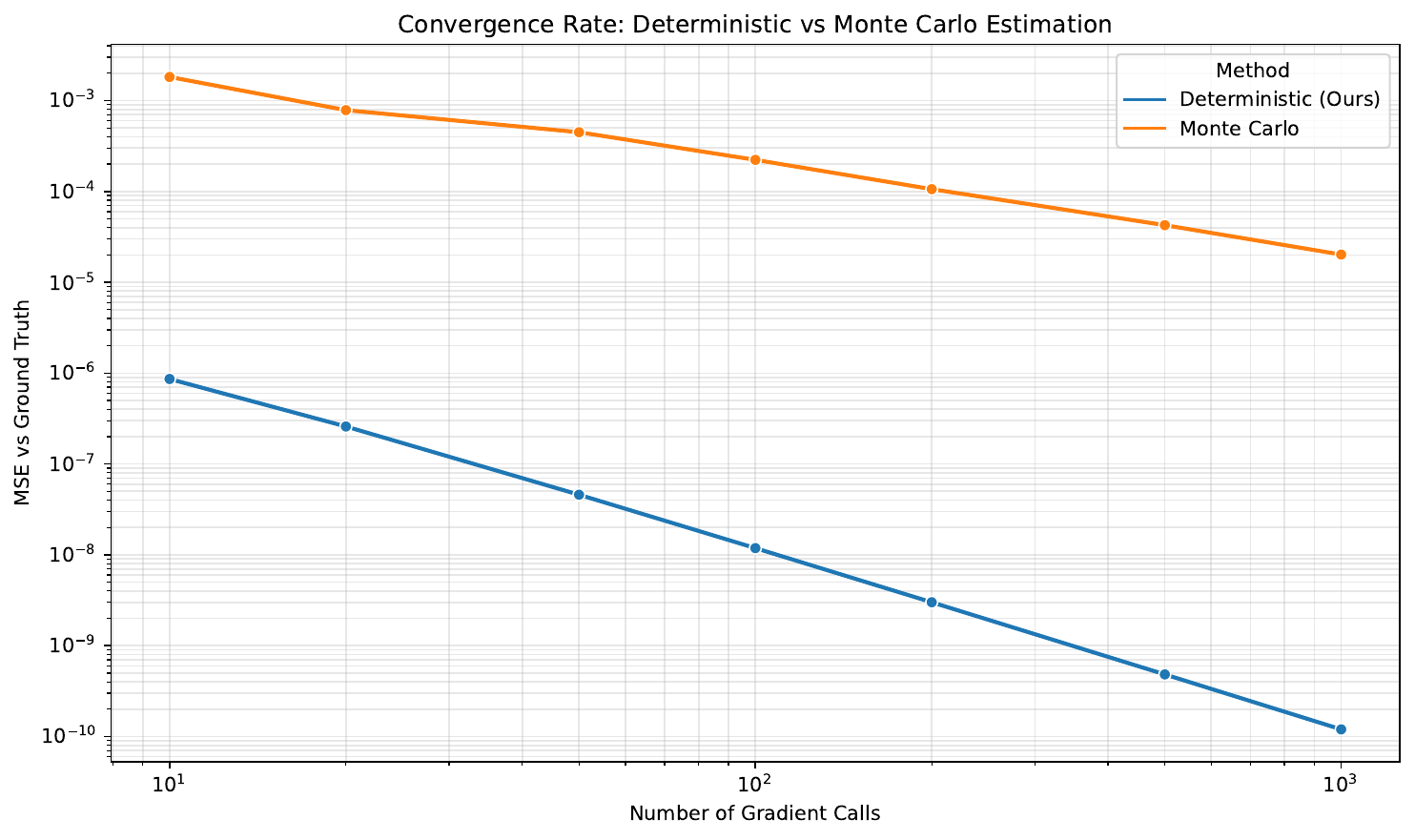} 
    \caption{Convergence rate comparison between the deterministic PS-IG estimator and Monte Carlo sampling on a log-log scale.}
    \label{fig:convergence}
\end{figure}

\section{Conclusion}
In this paper, we introduced PS-IG to address the baseline sensitivity and gradient volatility inherent in standard feature attribution methods. We established a formal equivalence between stochastic baseline sampling along a linear path and deterministic path weighting, proving that PS-IG is a specific instance of PWIG. This theoretical link allows for a deterministic implementation that improves the error convergence rate from $O(m^{-1/2})$ to $O(m^{-1})$ compared to Monte Carlo estimation. Furthermore, our variance analysis demonstrates that PS-IG inherently acts as a noise filter, strictly reducing attribution variance—specifically by a factor of $1/3$ under uniform sampling—without incurring additional computational costs. By reconciling the robustness of expectation-based methods with the efficiency of deterministic integration, PS-IG offers a rigorously grounded framework for reliable model interpretability.

\end{document}